\documentclass[conference]{IEEEtran}
\IEEEoverridecommandlockouts
\usepackage{cite}
\usepackage{booktabs}
\usepackage{amsmath,amssymb,amsfonts}
\usepackage{amsthm,bm}
\usepackage[T1]{fontenc}
\usepackage{url}
\usepackage{graphicx}
\graphicspath{{figures/}}
\usepackage{textcomp}
\usepackage{xcolor}
\usepackage[hidelinks]{hyperref}
\def\BibTeX{{\rm B\kern-.05em{\sc i\kern-.025em b}\kern-.08em
    T\kern-.1667em\lower.7ex\hbox{E}\kern-.125emX}}

\makeatletter
\newcommand{\linebreakand}{%
  \end{@IEEEauthorhalign}
  \hfill\mbox{}\par
  \mbox{}\hfill\begin{@IEEEauthorhalign}
}
\makeatother

\newtheorem{proposition}{Proposition}
\newtheorem{lemma}{Lemma}
\newcommand{\T}{^{\top}}
\newcommand{\Ad}{\mathrm{Ad}}
\newcommand{\ad}{\mathrm{ad}}
\newcommand{\R}{\mathbb{R}}
\newcommand{\cC}{\mathcal{C}}

\begin{document}

\title{Composable Dynamics for Cooperative Humanoids: A RoboCompiler Extension%
\thanks{This work was supported by the Post Docs in Companies (PoDoCo) program in Finland, in collaboration with Tampere University and Sandvik Mining and Construction Oy.}
}

\author{
\IEEEauthorblockN{Mehdi Heydari Shahna}
\IEEEauthorblockA{
\textit{Faculty of Engineering}\\
\textit{and Natural Sciences}\\
\textit{Tampere University}\\
Tampere, Finland\\
}
\and
\IEEEauthorblockN{Jouni Mattila}
\IEEEauthorblockA{
\textit{Faculty of Engineering}\\
\textit{and Natural Sciences}\\
\textit{Tampere University}\\
Tampere, Finland
}}

\maketitle

\begin{abstract}
Physically connected humanoids need consistent dynamics for internal actuator loops, inter-robot forces, and changing ground supports.
We extend RoboCompiler with member-wise dynamics assembly, compiling reduced dynamics from rigid bodies, moving actuator parts, and reflected rotor inertias.
Compiled body Jacobians and analytic constraint curvature yield an explicit Coriolis factor satisfying the power identity for floating-base quasi-velocities.
A two-level reduction retains reusable robot models while composing their supports and hand connections; local inertial updates have rank at most six, and prescribed-acceleration constraint wrenches are affine in actuator efforts.
These interfaces support actuator-aware tracking, coupled force allocation, payload updates, and support-mode switching for five hand-linked humanoids.
Across 264 feasible states in eight humanoid configurations, normalised acceleration discrepancies remain within $1.6\!\times\!10^{-10}$ against a hard-constraint reference and $1.4\!\times\!10^{-10}$ against a compiler-independent reference; MuJoCo simulations achieve 0.73~mm root-mean-square (RMS) pelvis error in the ring dance and complete side steps through 20 support-mode changes.
\end{abstract}

\begin{figure}[h!]
\centering
\includegraphics[width=\columnwidth]{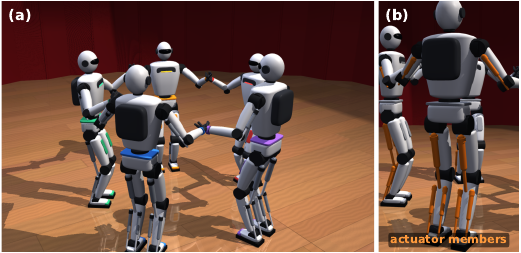}
\caption{(a)~Five closed-chain humanoids dance hand in hand on the realistic MuJoCo plant; from the logged
states. (b)~One humanoid with the moving actuator parts: the screw barrels and rods of knees, ankles and
elbows and the universal-joint crosses of the ankle struts contribute explicitly to the dynamics alongside the links and reflected rotor inertias.}
\label{fig:teaser}
\end{figure}

\begin{figure*}[!t]
\centering
\includegraphics[width=\textwidth]{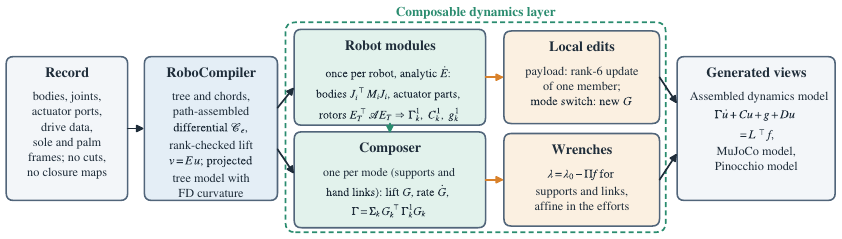}

\caption{RoboCompiler generates the lift $v=Eu$. The extension adds analytic rates and
state-dependent robot modules; the composer $G$ closes supports and hand links. Member edits and prescribed-acceleration
wrench maps serve downstream consumers. MuJoCo and Pinocchio views share the record.}

\label{fig:method}
\end{figure*}

\section{Introduction}

Physically connected humanoids must coordinate whole-body motion and interaction forces while satisfying internal actuator loops and changing ground supports.
Screw-driven knees, parallel ankles, and hand connections create closures at both robot and formation levels (Fig.~\ref{fig:teaser}).
Their dynamics must consistently account for loop closure, actuator transmission, and constraint curvature~\cite{featherstone},
while retaining each robot's inertial structure as payloads and support modes change.
The \emph{Springer Handbook of Robotics} identifies synchronization and physical interaction as central issues in
cooperative multirobot systems~\cite[Sec.~53.10.3]{parker2016multiple}.
Slipping interfaces add contact-mode and frictional-force requirements~\cite{kao2016contact}, motivating a model that separates persistent mechanical connections from changing environmental contacts.
Partial velocities and coordinate reduction establish dynamics in independent speeds and the recovery of constraint
forces~\cite{kane,wehage,ghorbel}. Constraint embedding organises loop-constrained submechanisms for recursive
dynamics~\cite{jain,chignoli}, while body-Jacobian formulations provide explicit Coriolis factors~\cite{garofalo,echeandia}.
Rigid-constraint libraries~\cite{pinocchio,proximal} and regularised simulators~\cite{mujoco} implement complementary
constrained dynamics. The integration task is to expose additive member contributions and reusable robot models directly from a physical record.

RoboCompiler~\cite{robocompiler} generates closure differentials,
rank-checked velocity lifts, power-conjugate actuator maps,
and projected tree dynamics from physical records; its
original implementation evaluates constraint curvature by
finite differences. The member-wise formulation of~\cite{ip} expresses dynamics through additive body contributions and supports local inertial changes when the coordinate-reduction map is supplied. The present extension connects these interfaces:
it compiles the member Jacobians and their analytic rates,
includes moving actuator bodies and reflected rotor inertias,
and separates internal robot reduction from the composition
of supports and inter-robot connections.

Our framework (Fig.~\ref{fig:method}) provides the following
contributions: 1) An algorithm that assembles member-level inertia,
Coriolis, gravity, damping, and actuation terms directly
from compiled physical records, with analytic lift curvature
and a member-wise power identity for floating-base
quasi-velocities. 2) A two-level composition that retains each robot's
internal reduction across support and connection changes,
while exposing local inertial updates and
prescribed-acceleration constraint-wrench maps. 3) A common implementation evaluated through independent
dynamics formulations, computational benchmarks, and
cooperative humanoid simulations with actuator effects,
payload changes, and support switching, supplemented by
state-wise checks on two other robot systems.

The code, results, and videos are publicly available at \textcolor{blue}{\url{https://github.com/Mehdi-Heydari-Shahna/RoboCompiler-Humanoids}}.

\section{Background}\label{sec:background}
Spatial motion vectors are ordered as angular and linear components, $V=[\omega;\nu]\in\R^6$, and spatial forces as
moment and force. Poses $T\in\mathrm{SE}(3)$ map body frames to the world; $\Ad_T$ transforms twists, $\ad_V$ represents
the Lie bracket, and $\hat x y=x\times y$. The operator $\vee$ converts a Lie-algebra matrix to its six-vector, and
$J_\ell$ is the left Jacobian of the exponential map. Dots denote derivatives along the physical motion; $I_d$ is the $d\times d$ identity.
The augmented configuration $q$ collects root poses, tree coordinates, and cut-joint coordinates. Its velocity
$v\in\R^n$ satisfies $\dot q=N(q)v$, where $N$ maps quasi-velocities to configuration rates; a floating root contributes
its body twist. Chord $e$ compares frames $T_e^-$ and $T_e^+$ through
$\rho_e=\log((T_e^-)^{-1}T_e^+)^\vee$. Its closure differential is~\cite{robocompiler,pacdm}
\begin{equation}\label{eq:pacdm}
\cC_e=A_eS\,\mathrm{diag}(\sigma_e),\qquad A_e=J_\ell^{-1}(\rho_e)\Ad_{(T_e^-)^{-1}},
\end{equation}
where $S\in\R^{6\times n}$ contains the world screws and $\sigma_e\in\{-1,0,1\}^n$ is the signed path incidence.
Chords sharing dependent coordinates form a module. With $p$ its dependent columns, $a$ its input columns, and
$\mathsf S$ a fixed selector of independent rows, rank-checked reduction gives
$v_p=-(\mathsf S\cC_p)^{-1}\mathsf S\cC_av_a$. Stacking modules yields
\begin{equation}\label{eq:lift}
v=E(q)u,\qquad \cC(q)E(q)=0.
\end{equation}
Here $E\in\R^{n\times d}$, $u\in\R^d$, $\cC$ stacks chord differentials, and $d$ is the number of independent speeds. For $n_T$ tree speeds, let
$E_T\in\R^{n_T\times d}$ contain the tree rows, $M_T$ and $h_T$ be the tree inertia and bias, and
$B\in\R^{n_T\times n_f}$ map the $n_f$ actuator efforts $f$ to tree forces. RoboCompiler projects these terms as
\begin{equation}\label{eq:rc}
\begin{gathered}
M_r\dot u+h_r=L\T f,\qquad L=B\T E_T,\\
M_r=E_T\T M_TE_T,\quad h_r=E_T\T(h_T+M_T\dot E_Tu).
\end{gathered}
\end{equation}
The product $\dot E_Tu$ is the constraint-induced acceleration bias, evaluated by a centred directional difference in
the original implementation~\cite[Listing~4]{robocompiler}.

\section{Composable Compiled Dynamics}\label{sec:method}
\subsection{Compiled body Jacobians and members}
Let $a_i\in\{0,1\}^n$ mark the coordinates on the tree path from the root to body $i$ and $T_i$ its pose. The body-frame
twist of the body is
\begin{equation}\label{eq:J}
V_i=J_i(q)\,u,\qquad J_i=\Ad_{T_i^{-1}}S\,\mathrm{diag}(a_i)\,E ,
\end{equation}
where $J_i\in\R^{6\times d}$ is the matrix of partial velocities~\cite{kane}. Along a motion,
$\dot J_i=\Ad_{T_i^{-1}}\big[(\dot S-\ad_{V^w_i}S)\mathrm{diag}(a_i)E+S\,\mathrm{diag}(a_i)\dot E\big]$, with $V^w_i$ the
world twist $V_i^w=\Ad_{T_i}V_i$, and $\dot S_j=\ad_{V^w_{c(j)}}S_j$ for the body $c(j)$ moved by coordinate $j$. The
\emph{members} are the rigid bodies, including the moving parts of actuators (screw barrels and rods,
universal-joint crosses), and the rotors: a drive with rotor inertia $I_{m,j}$ behind a transmission of ratio $N_j$ (for a screw of lead
$\ell_j$, $N_j=2\pi/\ell_j$) adds the reflected inertia $\alpha_j=I_{m,j}N_j^2$ to its tree coordinate. We collect
$\mathcal A=\mathrm{diag}(\alpha)$ and $\mathcal D=\mathrm{diag}(\beta)$ on the $n_T$ tree speeds; $\beta_j$ is the viscous coefficient of coordinate $j$, with zero entries where no damping is specified.

\subsection{Member-wise dynamics assembly}
The constant body-frame spatial inertia is $M_i=[I_i,\hat h_i;-\hat h_i,m_iI_3]$, with mass $m_i$, first moment $h_i=m_ic_i$, centre of mass $c_i$, inertia $I_i$ about the body origin, and identity $I_3$. Write its momentum as $P_i=M_iV_i=[n_i;l_i]$, with angular and linear momenta $n_i,l_i$ and let $\tilde P(P)=\left[\begin{smallmatrix}\hat n&\hat l\\
\hat l&0\end{smallmatrix}\right]$, so that $\tilde P(P)Y=\ad_Y\T P$ for every twist $Y$ and the Newton--Euler equation of
the body in its own frame reads $M_i\dot V_i-\tilde P(P_i)V_i=\mathcal F_i$, with $\mathcal F_i$ the total wrench on it.

\begin{proposition}[Member-wise assembly on the compiled lift]\label{prop:partition}
Assume ideal joints and closures, rotors that act as scalar inertias on tree coordinates, and a closure mode in which
every module differential has the rank of its dependent set. Then the equations of motion in the independent speeds,
with $\tau_{\rm ext}$ the projected external forces, $R_i$ the orientation of body $i$ and $g_0\in\R^3$ the world gravitational
acceleration, are
\begin{equation}\label{eq:eom}
\Gamma\dot u+C\,u+g+D\,u=L\T f+\tau_{\rm ext},
\end{equation}
with the member-wise sums
\begin{align}
\Gamma&=\textstyle\sum_i\Gamma_i+E_T\T\mathcal AE_T,&\Gamma_i&=J_i\T M_iJ_i,\label{eq:Gamma}\\
C&=\textstyle\sum_iC_i+E_T\T\mathcal A\dot E_T,&C_i&=J_i\T\big(M_i\dot J_i-\tilde P(M_iV_i)J_i\big),\label{eq:C}\\
g&=\textstyle\sum_ig_i,&g_i&=-J_i\T M_i\left[\begin{smallmatrix}0\\R_i\T g_0\end{smallmatrix}\right],\label{eq:g}
\end{align}
and $D=E_T\T\mathcal DE_T$. The model coincides with the projected model~\eqref{eq:rc} when $M_T$ contains the rotor
inertias and $h_T$ the viscous forces: $\Gamma=M_r$ and $Cu+g+Du=h_r$.
\end{proposition}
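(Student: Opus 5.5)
The plan is a Kane/d'Alembert projection onto the compiled lift. Under ideal joints and closures, and with every module differential of full dependent rank, \eqref{eq:lift} parametrises all admissible velocities as $v=Eu$. Each body twist is then $V_i=J_iu$ by \eqref{eq:J}, and the columns of $J_i$ are the partial velocities of body $i$. Let $\delta u$ be any virtual speed. Ideal constraint wrenches do no work on the admissible variations $J_i\delta u$, so the principle of virtual power gives
\[
\sum_i (J_i\delta u)\T\big(M_i\dot V_i-\tilde P(P_i)V_i-\mathcal F_i^{\rm app}\big)+(\text{rotor, damping, actuator terms})=0 .
\]
Here $\mathcal F_i^{\rm app}$ collects gravity and external wrenches; internal joint reactions, including the cut-joint reactions, drop out. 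The rotors are scalar inertias on tree coordinates, with generalized effort $\alpha_j\dot v_j$ on coordinate $j$. Tree velocities are $v_T=E_Tu$, so their virtual power is $\delta u\T E_T\T\mathcal A\dot v_T$. In the same way, damping contributes $\delta u\T E_T\T\mathcal D E_Tu$ and actuation contributes $\delta u\T E_T\T Bf=\delta u\T L\T f$. Since $\delta u$ is arbitrary, we obtain the $d$ scalar equations.

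Next I substitute $\dot V_i=J_i\dot u+\dot J_iu$, where $\dot J_i$ is the formula given after \eqref{eq:J}, and $\dot v_T=E_T\dot u+\dot E_Tu$. Collecting the $\dot u$ terms gives $\Gamma$ in \eqref{eq:Gamma}. Collecting the terms linear in $u$ (with $\tilde P(M_iV_i)$ also linear in $u$) gives $C$ in \eqref{eq:C}, including the rotor term $E_T\T\mathcal A\dot E_T$.

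For gravity, the world acceleration $g_0$ acts at the centre of mass. In body coordinates the gravity wrench is $M_i[0;R_i\T g_0]$. This holds because the spatial inertia applied to a pure linear acceleration $a$ gives moment $\hat h_i a=m_ic_i\times a$ and force $m_ia$, which is exactly the wrench of $m_ia$ acting at $c_i$. Moving this to the left-hand side gives $g_i$ in \eqref{eq:g}. The remaining external wrenches, projected through $J_i\T$, define $\tau_{\rm ext}$.

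For the equivalence with \eqref{eq:rc}, I use the standard tree identities. Let $J_{T,i}=\Ad_{T_i^{-1}}S\,\mathrm{diag}(a_i)$ restricted to the tree columns, so that $J_i=J_{T,i}E_T$ when the cut-joint columns do not lie on any body's tree path. Then $M_T=\sum_iJ_{T,i}\T M_iJ_{T,i}+\mathcal A$, and substitution gives $\Gamma=E_T\T M_TE_T=M_r$.

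For the bias, differentiate $J_i=J_{T,i}E_T$ to get $\dot J_i=\dot J_{T,i}E_T+J_{T,i}\dot E_T$. This splits $C_iu$ into two pieces:
\[
E_T\T J_{T,i}\T\big(M_i\dot J_{T,i}v_T-\tilde P(P_i)J_{T,i}v_T\big)\quad\text{and}\quad E_T\T J_{T,i}\T M_iJ_{T,i}\dot E_Tu .
\]
Summed over $i$, the first piece is $E_T\T$ times the Newton--Euler velocity-product bias of the tree. Adding gravity and $\mathcal D v_T$ yields $E_T\T h_T$. The second piece, together with the rotor term, yields $E_T\T M_T\dot E_Tu$. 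Hence $Cu+g+Du=h_r$.

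I expect the main obstacle to be verifying the $\dot J_i$ formula, and thereby that $\dot J_{T,i}v_T$ reproduces the recursive tree bias. I would first check that $\tfrac{d}{dt}\Ad_{T_i^{-1}}=-\ad_{V_i}\Ad_{T_i^{-1}}=-\Ad_{T_i^{-1}}\ad_{V_i^w}$. I would then check that a world screw moves with its child body, $\dot S_j=\ad_{V^w_{c(j)}}S_j$, where $V^w_{c(j)}$ is the twist of the body $c(j)$ moved by coordinate $j$. Second, I must justify treating the cut-joint rows as dropping out of $J_i$ through $\mathrm{diag}(a_i)$. That amounts to observing that $a_i$ marks only tree coordinates. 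Finally, the ``$\tilde P$'' term must equal the standard $\ad\T_{V}P$ gyroscopic force, which follows directly from the stated identity $\tilde P(P)Y=\ad_Y\T P$ with $Y=V_i$.
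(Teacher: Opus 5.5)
Your proposal is correct and follows the paper's own argument. It applies virtual power over the admissible speeds $v=Eu$, substitutes $\dot V_i=J_i\dot u+\dot J_iu$ and $\dot v_T=E_T\dot u+\dot E_Tu$, and then uses the factorisation $J_i=J_{T,i}E_T$ (cut coordinates carry no inertia) to recover $\Gamma=E_T^\top M_TE_T$ and $Cu=E_T^\top(C_Tv_T+M_T\dot E_Tu)$. Your extra checks of the gravity wrench $M_i[0;R_i^\top g_0]$ and of the explicit $\dot J_i$ formula are consistent with the paper, but the proposition itself only needs the product rule for $\dot J_i$ and the definition of the tree bias.
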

\begin{proof}
Ideal joints and closures do no work on velocities $v=Eu$, so by virtual power
$\sum_iJ_i\T(M_i\dot V_i-\tilde P(P_i)V_i-\mathcal F^g_i)+E_T\T(\mathcal A\dot v_T+\mathcal Dv_T)=L\T f+\tau_{\rm ext}$, with
$\mathcal F^g_i$ the weight. Substituting $\dot V_i=J_i\dot u+\dot J_iu$ and $\dot v_T=E_T\dot u+\dot E_Tu$
gives~\eqref{eq:eom}--\eqref{eq:g}. Cut coordinates carry no inertia, so $J_i=J_i^{\rm tree}E_T$ with the tree Jacobians,
hence $\Gamma=E_T\T M_TE_T$ and $Cu=E_T\T(C_Tv_T+M_T\dot E_Tu)$, where $C_T$ is the tree Coriolis factor. The argument applies directly to quasi-velocities.
\end{proof}

Every body satisfies $\dot\Gamma_i=C_i+C_i\T$, since
$C_i+C_i\T-\dot\Gamma_i=-J_i\T(\tilde P+\tilde P\T)J_i=0$. Reflected rotor contributions satisfy the same identity,
so any sum of member contributions obeys
\begin{equation}\label{eq:power}
\begin{gathered}
u\T(\tfrac12\dot\Gamma-C)u=0,\qquad \mathcal T=\tfrac12u\T\Gamma u,\\
\dot{\mathcal T}=u\T(L\T f+\tau_{\rm ext}-g-Du).
\end{gathered}
\end{equation}
Thus velocity-product forces balance the change of kinetic metric. The factor $C=C(q,u)$ acts on an auxiliary
speed $u_r\in\R^d$, as required by reference-velocity control. For floating-base quasi-velocities,~\eqref{eq:C} includes
the Boltzmann--Hamel terms~\cite{greenwood}. Linearity in member inertias gives the regressor
\begin{equation}\label{eq:regressor}
\Gamma\dot u+Cu_r+g=\sum_iY_i\pi_i+E_T\T\mathcal A(E_T\dot u+\dot E_Tu_r),
\end{equation}
where $Y_i(q,u,\dot u,u_r)\in\R^{d\times10}$ and $\pi_i\in\R^{10}$ collects the mass, three first moments, and six independent
entries of $I_i$~\cite{slotine}.

\subsection{Analytic curvature}
\begin{lemma}[Rate of the lift]\label{lem:curvature}
Under the rank assumption of Proposition~\ref{prop:partition}, along a motion on the closure manifold,
$\dot\cC_e=A_e\big(\dot S-\ad_{V^w_{e^-}}S\big)\mathrm{diag}(\sigma_e)$, with $V^w_{e^-}$ the world twist of the minus
frame of the chord, and for a module with selected rows $\mathsf S$, dependent columns $p$ and inputs $a$,
\begin{equation}\label{eq:Edot}
\dot E_p=-(\mathsf S\cC_p)^{-1}\mathsf S\big(\dot\cC_pE_p+\dot\cC_aE_a+\cC_a\dot E_a\big).
\end{equation}
\end{lemma}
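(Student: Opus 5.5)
The plan is to prove the two claims separately. The chord-rate formula comes from differentiating \eqref{eq:pacdm} factor by factor and using the fact that $\rho_e\equiv 0$ on the closure manifold. The module formula comes from differentiating the defining identity of the rank-checked reduction.

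\emph{Chord rate.} Write $\cC_e=A_eS\,\mathrm{diag}(\sigma_e)$, where $\sigma_e$ is constant, so that
$\dot\cC_e=(\dot A_eS+A_e\dot S)\mathrm{diag}(\sigma_e)$.

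For $\dot A_e$ I would use the product rule on $J_\ell^{-1}(\rho_e)\Ad_{(T_e^-)^{-1}}$.

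First, along a motion on the closure manifold the chord is closed, so $\rho_e(t)\equiv0$ and hence $\dot\rho_e\equiv0$. The series $J_\ell^{-1}(\rho)=I_6+\tfrac12\ad_\rho+O(|\rho|^2)$ then gives
$\tfrac{d}{dt}J_\ell^{-1}(\rho_e)=\tfrac12\ad_{\dot\rho_e}+O(|\rho_e|)\dot\rho_e=0$ and $J_\ell^{-1}(\rho_e)=I_6$.

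Second, for the adjoint factor I would use $\tfrac{d}{dt}\Ad_{T^{-1}}=-\ad_{V^b}\Ad_{T^{-1}}$, with $V^b$ the body twist. The intertwining identity $\ad_{\Ad_{T^{-1}}X}\Ad_{T^{-1}}=\Ad_{T^{-1}}\ad_X$ then rewrites this as $-\Ad_{T^{-1}}\ad_{V^w}$, where $V^w=\Ad_TV^b$.

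Together these give $\dot A_e=-A_e\ad_{V^w_{e^-}}$, using $A_e=\Ad_{(T_e^-)^{-1}}$ on the manifold. This yields $\dot\cC_e=A_e(\dot S-\ad_{V^w_{e^-}}S)\mathrm{diag}(\sigma_e)$. The column rates $\dot S_j=\ad_{V^w_{c(j)}}S_j$ are those stated before Proposition~\ref{prop:partition}; they are just the rate of a world screw carried by body $c(j)$.

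\emph{Module rate.} The reduction defines $E_p=-(\mathsf S\cC_p)^{-1}\mathsf S\cC_aE_a$, which is equivalent to the identity
$\mathsf S\cC_pE_p+\mathsf S\cC_aE_a=0$ holding for all $q$ on the manifold. Here $E_a$ denotes the rows of $E$ for the module's inputs, and the row selector $\mathsf S$ is fixed.

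I will differentiate this identity along the motion, using the product rule, and then solve for $\dot E_p$. The rank assumption of Proposition~\ref{prop:partition} keeps $\mathsf S\cC_p$ invertible, so $(\mathsf S\cC_p)^{-1}$ is smooth along the motion and we can multiply by it. This gives exactly \eqref{eq:Edot}.

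The rates $\dot E_a$ are available because of the order in which modules are stacked. Inputs are either independent speeds, whose rows of $E$ are constant selector rows with zero rate, or dependent coordinates of earlier modules, whose $\dot E$ rows have already been computed. So \eqref{eq:Edot} is evaluated recursively in the same order used to build $E$.

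The step I expect to need the most care is the vanishing of the $J_\ell^{-1}$ rate. It depends on evaluating \emph{exactly} on the closure manifold, where $\rho_e=0$ identically rather than just at one instant. I would state this explicitly and note that, off the manifold, an extra term $\tfrac{d}{dt}J_\ell^{-1}(\rho_e)$ would appear. The adjoint-derivative sign conventions (body versus world twist) must also be checked against the ordering $V=[\omega;\nu]$ fixed in Section~\ref{sec:background}.
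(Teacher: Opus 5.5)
Your proposal is correct and follows essentially the same route as the paper: $\dot\rho_e=0$ removes the rate of $J_\ell^{-1}$, the adjoint factor differentiates to $-A_e\ad_{V^w_{e^-}}$, and differentiating $\mathsf S(\cC_pE_p+\cC_aE_a)=0$ in dependency order yields \eqref{eq:Edot}. The paper obtains $\dot\rho_e=\cC_ev=0$ on all six rows from the rank assumption, which shows that a lift enforcing only the selected rows still keeps the motion on the manifold; also, for the left Jacobian your series should read $J_\ell^{-1}(\rho)=I_6-\tfrac12\ad_\rho+\dots$, though this is immaterial because the chain rule only needs $\dot\rho_e=0$ (and not even $J_\ell^{-1}(\rho_e)=I_6$).
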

\begin{proof}
On the manifold $\rho_e=0$ and, by the rank assumption, $\dot\rho_e=\cC_ev=0$ on all six rows, so $J_\ell^{-1}$ has no
rate, and $\tfrac{d}{dt}\Ad_{T^{-1}}=-\Ad_{T^{-1}}\ad_{V^w}$. Differentiating $\mathsf S(\cC_pE_p+\cC_aE_a)=0$
gives~\eqref{eq:Edot}; modules are processed in the order of their dependencies.
\end{proof}
The rate preserves the differential's incidence pattern and stacked evaluation, providing curvature without finite differences.

\subsection{Two-level composition}
Consider $K$ compiled mechanisms (here: robots), each with its own chords. Reducing mechanism $k$ over
its internal chords gives a lift $v_k=E^1_kw_k$ to its \emph{level-1 speeds} $w_k$ and, by
Proposition~\ref{prop:partition}, a \emph{robot module} $(\Gamma^1_k,C^1_k,g^1_k,D^1_k,L^1_k)$. A support or an
inter-robot link compares two body frames, so its differential in the level-1 speeds is a difference of compiled world
Jacobians, $\bar\cC_e=A_e(J^{1,w}_{e^+}-J^{1,w}_{e^-})$, plus the columns of the three rotation (chart) coordinates of a
ball-jointed link. Selecting rows as before gives
the composer lift $w=G(q)\,u$, with $w=\mathrm{col}(w_1,\ldots,w_K,w_\chi)$ stacking the level-1 speeds and $n_\chi$ massless link-chart rates. Thus $G\in\R^{(\sum_kd_{1,k}+n_\chi)\times d}$, where $d_{1,k}=\dim w_k$.

\begin{proposition}[Composition]\label{prop:composition}
Let the dependent coordinates of the composer be level-1 speeds or chart coordinates of links, and let $G_k$ be the rows of
$G$ that belong to robot $k$. Then
\begin{equation}\label{eq:compose}
\begin{aligned}
\Gamma&=\textstyle\sum_kG_k\T\Gamma^1_kG_k, & C&=\textstyle\sum_kG_k\T\big(C^1_kG_k+\Gamma^1_k\dot G_k\big),\\
g&=\textstyle\sum_kG_k\T g^1_k, & L&=\big[L^1_kG_k\big]_k,
\end{aligned}
\end{equation}
and $D=\sum_kG_k\T D^1_kG_k$. This is the model of Proposition~\ref{prop:partition} for the one-level lift
$E=\mathrm{blkdiag}(E^1_1,\dots,E^1_K,I)\,G$. At a fixed robot state, its module terms are independent of the active supports and inter-robot links. Here $[\cdot]_k$ denotes vertical stacking, and $I=I_{n_\chi}$ acts on chart speeds with zero inertia, bias, and actuation.
\end{proposition}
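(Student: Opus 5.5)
The plan is to show that the composed model is the model of Proposition~\ref{prop:partition} for the one-level lift $E=\mathrm{blkdiag}(E^1_1,\dots,E^1_K,I)\,G$, and then read off~\eqref{eq:compose} by regrouping the member sums robot by robot.

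\emph{Step 1: $E$ is an admissible lift.} The product is a lift of the whole augmented system. Every internal chord of robot $k$ satisfies $\cC^{\rm int}_kE^1_k=0$, so it annihilates $E$. Every support or link chord has level-1 differential $\bar\cC_e$, built from $J^{1,w}=\Ad_T J^{\rm world}E^1$, with $\bar\cC_eG=0$ by construction of $G$. Together these give $\cC E=0$. The rank assumption for the one-level lift follows from the rank checks at both levels, because dependent level-1 speeds or chart coordinates map to dependent full coordinates through the invertible module reductions. Thus Proposition~\ref{prop:partition} applies to $E$, and its equations of motion are valid.

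\emph{Step 2: factor the Jacobians.} Each body $i$ belongs to exactly one robot $k$, and the massless chart coordinates touch no body. Writing $J_i^1=\Ad_{T_i^{-1}}S\,\mathrm{diag}(a_i)E^1_k$ for the level-1 Jacobian, we get $J_i=J_i^1G_k$. Likewise the tree rows of robot $k$ give $E_{T}$ restricted to $k$ equal to $E^1_{T,k}G_k$. Differentiating gives
\[
\dot J_i=\dot J^1_iG_k+J^1_i\dot G_k,\qquad \dot E_{T,k}=\dot E^1_{T,k}G_k+E^1_{T,k}\dot G_k.
\]

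\emph{Step 3: regroup the member sums.} Inserting the factorisation into~\eqref{eq:Gamma} gives
\[
\Gamma_i=G_k\T J_i^{1\top}M_iJ^1_iG_k .
\]
Summing over the bodies and rotors of robot $k$ yields $G_k\T\Gamma^1_kG_k$. For $C_i$ we use $V_i=J^1_iw_k$ with $w_k=G_ku$, and note that $\tilde P(M_iV_i)$ depends only on the velocity. This gives
\[
C_i=G_k\T J_i^{1\top}\big(M_i\dot J^1_i-\tilde P J^1_i\big)G_k+G_k\T J_i^{1\top}M_iJ^1_i\dot G_k .
\]
The rotor term splits the same way. Summing produces $G_k\T(C^1_kG_k+\Gamma^1_k\dot G_k)$. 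Gravity gives $G_k\T g^1_k$ directly, and damping gives $G_k\T D^1_kG_k$. For actuation, $B$ is block diagonal over robots, so $B\T E_T=[B_k\T E^1_{T,k}G_k]_k=[L^1_kG_k]_k$.

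\emph{Step 4: independence from supports.} The module terms $(\Gamma^1_k,C^1_k,g^1_k,D^1_k,L^1_k)$ are built only from robot $k$'s record, its internal chords, and its state $(q_k,w_k)$. They depend on neither $G$ nor the active supports and links. All dependence on supports and inter-robot connections enters through $G$ and $\dot G$.

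\emph{Main obstacle.} The expected difficulty is the Coriolis step. There $C^1_k=C^1_k(q_k,w_k)$ must be evaluated at $w_k=G_ku$, and $\dot J^1_i$ must be the level-1 rate along the actual motion. The check is that Proposition~\ref{prop:partition}, applied to robot $k$ alone with speeds $w_k$, defines $C^1_k$ using exactly these rates, so the two evaluations agree. A second point to verify is that the massless chart rows contribute nothing. They carry no inertia, bias, or actuation, so their blocks in $\mathrm{blkdiag}(\cdot,I)$ drop out of every sum.
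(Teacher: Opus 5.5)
Your proposal is correct and follows the paper's proof. You factor $J_i=J^1_iG_k$ (and the tree rows $E_{T,k}=E^1_{T,k}G_k$), differentiate to $\dot J_i=\dot J^1_iG_k+J^1_i\dot G_k$, and substitute into the member sums of Proposition~\ref{prop:partition}; your Step~1 kernel check for $\mathrm{blkdiag}(E^1_1,\dots,E^1_K,I)\,G$ and your explicit rotor, damping and actuation terms just fill in details the paper leaves implicit (one slip: the compiled world Jacobian is $J^{1,w}_i=\Ad_{T_i}J^1_i=S\,\mathrm{diag}(a_i)E^1_k$, not $\Ad_T J^{\rm world}E^1$ as you wrote).
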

\begin{proof}
Nested lifts compose, $J_i=J^1_iG_k$ and $\dot J_i=\dot J^1_iG_k+J^1_i\dot G_k$; substitute
in~\eqref{eq:Gamma}--\eqref{eq:g}.
\end{proof}

Composition follows the covariance of projected dynamics~\cite[Eq.~(22)]{robocompiler} under a rectangular change of
speeds. A support or link mode replaces $G$ and its rate while retaining the internal reductions and robot terms evaluated at the current state.

\subsection{Local edits and constraint wrenches}
\begin{proposition}[Inertial edit]\label{prop:update}
At a fixed state, if the spatial inertia of body $i$ changes by $\Delta M$ (payload, tool, component variant), then
$\Gamma'=\Gamma+J_i\T\Delta MJ_i$, $g'=g-J_i\T\Delta M[0;R_i\T g_0]$ and
$C'=C+J_i\T(\Delta M\dot J_i-\tilde P(\Delta MV_i)J_i)$; the lift, the port map and all other members are unchanged, the
update of $\Gamma$ has rank $r\le6$, and if $\Gamma$ and $\Gamma'$ are nonsingular, their inverses are related by an $r\times r$ Woodbury solve.
\end{proposition}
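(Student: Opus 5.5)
The plan is to read everything off the member-wise sums of Proposition~\ref{prop:partition}, using the fact that each term there is linear in the member inertia $M_i$. First I would fix the state $(q,u)$. The lift $E$ depends only on the kinematic record, through the chord differentials and the rank-checked module reduction in \eqref{eq:lift}. It therefore does not involve any inertia, so $E$, $\dot E$ (Lemma~\ref{lem:curvature}) and every $J_i,\dot J_i$ in \eqref{eq:J} stay the same. Likewise $L=B\T E_T$ involves only the actuator map and the lift, so the port map is unchanged. The rotor and damping terms $E_T\T\mathcal AE_T$, $E_T\T\mathcal A\dot E_T$ and $D$ do not involve $M_i$ either.

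Next I would replace $M_i$ by $M_i+\Delta M$ in \eqref{eq:Gamma}--\eqref{eq:g} and use linearity. The inertia term $\Gamma_i=J_i\T M_iJ_i$ and the gravity term $g_i$ are plainly linear in $M_i$. For $C_i$, linearity rests on $\tilde P$ being linear in its argument and $P_i=M_iV_i$ being linear in $M_i$. This gives $\tilde P((M_i+\Delta M)V_i)=\tilde P(M_iV_i)+\tilde P(\Delta MV_i)$. The stated increments of $\Gamma$, $C$ and $g$ then follow directly, and every other member's contribution is untouched. If the edit is a payload rigidly attached to body $i$ and expressed in body $i$'s frame, $\Delta M$ is itself a valid spatial inertia increment. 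No further structure is needed, because the proof uses only the linearity.

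For the rank claim, I would factor $J_i\T\Delta MJ_i=J_i\T(\Delta M J_i)$ with $J_i\in\R^{6\times d}$. Its rank is then at most $\mathrm{rank}(\Delta M)\le 6$, and also at most $\mathrm{rank}(J_i)\le\min(6,d)$. The Woodbury step is where the most care is needed. $\Delta M$ may be singular, for example a point mass, whose inertia has rank three, or a pure mass change. So I would not write $\Gamma'=\Gamma+J_i\T\Delta MJ_i$ with $\Delta M^{-1}$ directly. Instead I would take a rank factorisation $J_i\T\Delta MJ_i=UWU\T$ with $U\in\R^{d\times r}$ of full column rank and $W\in\R^{r\times r}$ symmetric nonsingular. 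This can be obtained from an eigendecomposition of the symmetric matrix $\Delta M$ restricted to the range of $J_i$, or from a thin QR of $J_i\T$ followed by compression of the core.

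With that factorisation I would show that $W^{-1}+U\T\Gamma^{-1}U$ is invertible whenever $\Gamma$ and $\Gamma'$ are. This follows from the determinant identity
$\det\Gamma'=\det\Gamma\,\det W\,\det(W^{-1}+U\T\Gamma^{-1}U)$.
Then
$\Gamma'^{-1}=\Gamma^{-1}-\Gamma^{-1}U(W^{-1}+U\T\Gamma^{-1}U)^{-1}U\T\Gamma^{-1}$,
which needs only an $r\times r$ solve. I expect this invertibility check under an indefinite or singular $\Delta M$, which arises for example when a payload is removed, to be the only real subtlety. It is settled by the determinant lemma above.
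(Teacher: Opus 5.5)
Your proposal is correct and follows the paper's own argument, which is that every member term in \eqref{eq:Gamma}--\eqref{eq:g} is linear in $M_i$, while the lift, port map, rotor and damping terms do not depend on that inertia. Your rank-$r$ factorisation $J_i^{\top}\Delta M J_i=UWU^{\top}$ and the determinant-lemma check on $W^{-1}+U^{\top}\Gamma^{-1}U$ simply spell out carefully the $r\times r$ Woodbury step that the paper leaves implicit, including for singular or indefinite $\Delta M$.
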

\noindent The statement is immediate from~\eqref{eq:Gamma}--\eqref{eq:g}, and $\Delta M$ enters every term linearly.

\begin{proposition}[Wrench of a chord]\label{prop:wrench}
Let a model with speeds $w$ and terms $(\Gamma^1,C^1,g^1,D^1,L^1)$ be constrained further by chords with differential
$\bar\cC$, let a module of them have dependent coordinates $p$ and selected rows $\mathsf S$, and let no other module take a
coordinate of $p$ as an input. At fixed $q,w$, let the prescribed acceleration $\dot w$ satisfy
the active constraint acceleration equations, and let $f$ satisfy
the corresponding projected dynamics with external force
$\tau_{\mathrm{ext}}^1$. Then the multipliers on the selected
rows are
\begin{equation}\label{eq:wrench}
\lambda=(\mathsf S\bar\cC_p)^{-\top}\big[\Gamma^1\dot w+C^1w+g^1+D^1w-L^{1\top}f-\tau^1_{\rm ext}\big]_p ,
\end{equation}
Thus $\lambda=\lambda_0-\Pi f$, where $\lambda_0$ is~\eqref{eq:wrench} evaluated at $f=0$ and $\Pi=(\mathsf S\bar\cC_p)^{-\top}[L^{1\top}]_p$; $[\cdot]_p$ selects dependent entries.
\end{proposition}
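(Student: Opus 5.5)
We would prove Proposition~\ref{prop:wrench} with a Lagrange-multiplier formulation in the level-1 speeds $w$. By the virtual-power argument in the proof of Proposition~\ref{prop:partition}, the chords add a constraint force to the model with terms $(\Gamma^1,C^1,g^1,D^1,L^1)$. Ideal chords transmit wrenches only along the rows of their differential, so the constrained equations read
\begin{equation*}
\Gamma^1\dot w+C^1w+g^1+D^1w-L^{1\top}f-\tau^1_{\rm ext}=\bar\cC\T\Lambda ,
\end{equation*}
with $\Lambda$ the stacked chord multipliers. Using the rank assumption, we would keep only the selected rows $\mathsf S$ of each module. The rows not selected are linearly dependent on the selected ones on the closure manifold, so their multipliers can be absorbed into $\lambda$ without changing $\bar\cC\T\Lambda$. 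This gives a right-hand side $\sum_{\text{modules}}(\mathsf S\bar\cC)\T\lambda_{\rm mod}$.

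Next, we would restrict this equation to the dependent entries $p$ of the module under consideration. A column of $\bar\cC$ indexed by $p$ is nonzero only for modules that involve those coordinates, either as dependent coordinates or as inputs. The hypothesis that no other module takes a coordinate of $p$ as an input, together with the fact that modules have disjoint dependent sets, means that only the present module contributes to the $p$-rows. Hence $[\text{residual}]_p=(\mathsf S\bar\cC_p)\T\lambda$. This decoupling is the main point to check carefully. It requires that the stacked differential has a block structure in which the $p$-columns belong to a single module. We would verify it directly from the module construction that precedes~\eqref{eq:lift}.

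Because $\mathsf S\bar\cC_p$ is square and invertible by the rank assumption, solving for $\lambda$ gives~\eqref{eq:wrench}. The remaining hypotheses play the following roles:
\begin{itemize}
\item the condition that $\dot w$ satisfies the constraint acceleration equations and $f$ satisfies the projected dynamics makes the full multiplier system consistent, so the $p$-rows determine $\lambda$ uniquely;
\item the complementary rows, which are the projection by $E^\top$, hold automatically.
\end{itemize}

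Finally, the bracketed residual in~\eqref{eq:wrench} is affine in $f$ at fixed $q,w,\dot w$, and $f$ enters only through $-L^{1\top}f$. Splitting the residual into its value at $f=0$ plus this linear term yields $\lambda=\lambda_0-\Pi f$ with $\Pi=(\mathsf S\bar\cC_p)^{-\top}[L^{1\top}]_p$.
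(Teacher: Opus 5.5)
Your proposal is correct and follows the paper's own proof. You write the unreduced level-1 equations with a multiplier term $(\mathsf S\bar\cC)\T\lambda$, use the disjoint dependent sets and the no-input hypothesis to see that the $p$-rows contain only this module's multipliers, and invert $\mathsf S\bar\cC_p$. Your extra remarks fill in details the paper leaves implicit: absorbing unselected-row multipliers via the rank condition, and obtaining the multipliers' existence from the projected dynamics. The latter projection is by $G\T$ in the level-1 speeds, not $E\T$.
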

\begin{proof}
The unreduced equations are $\Gamma^1\dot w+C^1w+g^1+D^1w=L^{1\top}f+\tau^1_{\rm ext}+(\mathsf S\bar\cC)\T\lambda$; by hypothesis the rows $p$
contain only the multipliers of this module, $[\cdot]_p=(\mathsf S\bar\cC_p)\T\lambda$.
\end{proof}
Constraint reactions follow from the dependent-coordinate equations~\cite{wehage}. For a support,~\eqref{eq:wrench} gives the sole reaction; for a hand link it gives the inter-robot force. At the prescribed acceleration, support-wrench limits become linear inequalities in $f$.

\section{Compiler Integration and Humanoid Benchmark}\label{sec:implementation}
\subsection{Implementation}
A physical record lists bodies with inertias, joints with both attachment frames, actuator ports with
effort bounds, drive data (rotor inertia, transmission, damping, friction) and named sole and palm frames; no joint is
marked as a cut and no closure map is supplied. Our implementation of the compiler of~\cite{robocompiler} selects the tree, keeping requested
independent joints, actuated joints and joints with drive data in it, forms the dependency modules and acquires a
feasible configuration with the PACDM solver~\cite{pacdm} of the RoboCompiler repository, used unchanged. During reference generation, hand-link dependent coordinates
are reselected when the corresponding arm Jacobian becomes
poorly conditioned. The resulting changes of representation
follow the covariance relation
in~\cite[Eq.~(22)]{robocompiler}; execution uses the prepared
coordinate selections of the active support mode.

Poses, screws and their rates are computed level by level on stacked arrays, and structurally identical
modules of one layer are solved as one stacked system. One evaluation returns $(\Gamma,C,g,D,L)$, the robot modules, the
composer lift and its rate. The Python/NumPy extension adds routines for analytic rates, member-wise dynamics assembly, composition, inertial updates, and wrench recovery in \texttt{ipd/}, with record-based backend exporters and reproducible benchmark workflows.
The executable model and benchmark definitions specify the
geometry, inertias, drive parameters, contact settings, and
reference motions. Dependency versions are pinned, and result
summaries record package versions and a source-code hash. MuJoCo policy comparisons use the common sensor-noise seed; reported tracking errors
are trajectory RMS values for this fixed seed.
The same compiled arrays generate (i)~the assembled dynamics model, (ii)~an MJCF model in which tree joints carry
their armature, damping and friction, chords are equality constraints and soles touch the floor through native contact,
and (iii)~a Pinocchio model with rigid constraint models for all chords. The views share the physical record and use distinct dynamics algorithms.
The first reference solves a hard-constraint Karush--Kuhn--Tucker (KKT) system in 50-digit ball
arithmetic~\cite{arb}, assembled from double-precision Pinocchio tree terms and physical constraint rows. For tree
accelerations $a_T=E_T\dot u+\dot E_Tu$ and reference $a_T^\star$, the numerical discrepancies are
\begin{equation}\label{eq:metrics}
\begin{aligned}
\delta_a&=\frac{\|a_T-a_T^\star\|_\infty}{\max(1,\|a_T^\star\|_\infty)},\\
\delta_f&=\frac{\|M_T(a_T-a_T^\star)\|_\infty}{\max(1,\|Bf\|_\infty,\|h_T\|_\infty)}.
\end{aligned}
\end{equation}
The second reference assembles maximal-coordinate Newton--Euler equations directly from the record, treating each body
as a free rigid body and each joint through its attachment frames. It bypasses the compiler's tree and lift, sharing
the sampled state and efforts. Its discrepancy is
$\delta_b=\|x-x^\star\|_\infty/\max(1,\|x^\star\|_\infty)$, where $x$ stacks world angular and centre-of-mass
accelerations. All metrics use component values in SI units, following~\cite[Eq.~(23)]{robocompiler}.

\subsection{Humanoid models and formations}\label{sec:systems}
Table~\ref{tab:systems} lists the humanoid closure modes. The synthetic \emph{humanoid} (Fig.~\ref{fig:teaser}) has 42 bodies, 49 joints and 8
independent loops: per side, a knee driven by a ball screw through a planar triangle, an ankle with two passive axes
driven by two screw struts (universal joint on the shank, ball joint on the foot), an elbow driven by a screw, and rotary
hip, shoulder and waist drives, 21 ports in all. The drives carry representative data of electric humanoids
(rotor inertia, gear ratio or screw lead, peak effort, no-load speed, viscous and Coulomb friction); a 5~mm lead gives the
knee screw an equivalent mass of 32~kg along its stroke, more than any link of the leg, making actuator inertia dynamically significant. Five humanoids of different link dimensions and inertias are placed in a ring, a semicircle or a line.
Hand holding is modelled by ideal spherical (ball) joints between rigid hand links: coincident palm points constrain
relative translation while allowing relative rotation, and transmit forces without joint moments about the connection point.
Writing the two world attachment positions as $p_e^-$ and $p_e^+$, the hand constraint is
\begin{equation}\label{eq:hand}
p_e^+(q)-p_e^-(q)=0.
\end{equation}
These persistent connections are composed with the active sole supports; with both soles of every robot as supports, the ring has 210 bodies, 55
chords (robot loops, soles and hand links) and 60 independent speeds.

\begin{table}[!t]
\centering
\caption{Humanoid closure modes: massive bodies, record joints, chords (loops, supports, hand links), actuator ports, independent speeds $d$ and level-1 speeds per robot $d_1$.}
\label{tab:systems}
\footnotesize\setlength{\tabcolsep}{3.2pt}
\begin{tabular}{@{}lrrrrrl@{}}
\toprule
System, mode & Bodies & Joints & Chords & Ports & $d$ & $d_1$ \\
\midrule
Humanoid, no support & 42 & 49 & 8 & 21 & 27 & -- \\
Humanoid, one sole & 42 & 49 & 9 & 21 & 21 & -- \\
Humanoid, two soles & 42 & 49 & 10 & 21 & 15 & -- \\
Ring of five, all soles & 210 & 245 & 55 & 105 & 60 & 5$\times$27 \\
Ring, left soles & 210 & 245 & 50 & 105 & 90 & 5$\times$27 \\
Ring, actuator space & 210 & 245 & 55 & 105 & 60 & 5$\times$27 \\
Semicircle of five & 210 & 245 & 54 & 105 & 63 & 5$\times$27 \\
Line of five & 210 & 245 & 54 & 105 & 63 & 5$\times$27 \\
\bottomrule
\end{tabular}
\end{table}

\subsection{Closed-loop setup}\label{sec:setup}
The \emph{realistic plant} denotes the MuJoCo simulation~\cite{mujoco} generated from the same records and stepped at
1000~Hz, with regularised equality constraints for loops and hand links and unilateral frictional contact under
the soles (rubber on wood: friction coefficient 0.8, torsional friction, elliptic cones,
5~ms contact time constant). Foot--ground contacts can slip or separate, while hand links retain the spherical connection topology. The plant also includes the following drive and sensing effects: Coulomb friction in every
drive and passive bearing (rod-end pins and ball joints are ideal), a first-order lag of 2~ms of every applied
effort, the torque--speed limit of every motor (full effort up to 50\% of the no-load speed), and
zero-mean Gaussian noise on the measured state (standard deviations 0.1~mrad and 10~mrad/s
for rotary joints, 5~$\mu$m and 1~mm/s for screws, 0.25~mm and
0.25~mrad for the base pose, 5~mm/s and 5~mrad/s for its twist).
The \emph{ideal MuJoCo plant} disables joint Coulomb friction, actuator lag, speed-dependent effort limits, and sensor noise, while retaining viscous damping, regularised constraints, and frictional sole contacts; the \emph{Pinocchio plant} integrates the rigid constrained dynamics~\cite{pinocchio,proximal} with ideal sole
welds and Baumgarte position and velocity gains of 400~s$^{-2}$ and 40~s$^{-1}$.
Noise is sampled independently at each control update;
root poses are perturbed through the $SE(3)$ exponential map.
Cut-joint chart coordinates are reconstructed from the
measured body poses. The controller selects the active
mode's independent measured speeds and reconstructs
dependent model speeds through the compiled lifts.
In MuJoCo executions, ideal support anchors are set to the
current measured sole poses at each evaluation, while slip
and separation are governed by the plant's unilateral
contacts. Reference generation applies closure correction; online evaluation
uses a nominal constrained model at the measured configuration,
with dependent speeds reconstructed through the compiled lifts.
The analytic identities hold under the stated rank and closure
assumptions.
The controller updates at 2~ms intervals of simulated time,
while MuJoCo advances in 1~ms steps.
Let $e$ be the independent-coordinate error, $u_d$ the desired speed expressed in the current frame,
$u_r=u_d-\Lambda e$, $\dot u_r=\dot u_d-\Lambda\dot e$, and $s=u-u_r$. The shared tracking law is
\begin{equation}\label{eq:law}
Q=\Gamma(\dot u_r-k_ss)+Cu_r+g+Du,
\end{equation}
where $Q\in\R^d$ is the commanded generalised force, $\Lambda=15I_d$~s$^{-1}$, and $k_s=45$~s$^{-1}$.
Scalar coordinates use their configuration difference. For root $b$, let $T_b$ and $T_{d,b}$ be the actual and desired poses and define the six-dimensional blocks
$e_b=\log(T_{d,b}^{-1}T_b)^\vee$ and $u_{d,b}=X_b\bar v_{d,b}$, with desired root twist $\bar v_{d,b}$,
actual twist $v_b$, and $X_b=\Ad_{T_b^{-1}T_{d,b}}$. Then
$\dot e_b=J_\ell^{-1}(-e_b)(v_b-u_{d,b})$ and
$\dot u_{d,b}=X_b\dot{\bar v}_{d,b}-\ad_{v_b}u_{d,b}$; independent entries form $e$ and $u_d$.
Efforts are allocated through
\begin{equation}\label{eq:allocation}
\begin{aligned}
\min_{f,z}\;&f\T Wf+10^4\|z\|^2+\eta\sum_b\|H\lambda_b(f)\|^2,\\
\text{s.t. }&L\T f+z=Q,\quad f_{\min}\le f\le f_{\max},\\
&\lambda_b(f)\in\mathcal K_b\quad\text{for each support }b.
\end{aligned}
\end{equation}
Here $f_{\min},f_{\max}\in\R^{n_f}$ are the fixed actuator-effort bounds specified by the physical record, $z\in\R^d$ is slack, $W_{jj}=1/\max(|f_{\min,j}|,|f_{\max,j}|)^2$ normalises effort, and $\mathcal K_b$ is the
sole contact-wrench cone~\cite{caron}. For the sole wrench $\lambda_b=[m_b;F_b]$ with moment $m_b$ and force $F_b$ in its contact frame, $H$ selects tangential forces and all moments, dividing moments by 0.1~m;
$\eta=0$ for dances and $2\!\times\!10^{-4}$ for stepping. 
At each evaluated state, the nominal acceleration induced by the
allocated efforts is
\[
\dot u_{\mathrm{alloc}}(f)
=\Gamma^{-1}(L^\top f-Cu-g-Du),
\qquad
\dot w_{\mathrm{alloc}}(f)
=G\dot u_{\mathrm{alloc}}(f)+\dot G u.
\]
Substituting $\dot w_{\mathrm{alloc}}(f)$ into \eqref{eq:wrench} defines the
support-wrench maps $\lambda_b(f)$ used in the objective and
constraints of \eqref{eq:allocation}. At a fixed state, these maps are affine in
$f$, so the allocation remains a quadratic program. Support and
inter-robot wrench predictions use the same allocated nominal
acceleration. The equality $L^\top f+z=Q$ defines the allocation
residual $z$. DAQP~\cite{daqp} solves the quadratic program; reported
allocation slack is $\|z\|_\infty$.
The support limits impose a centre of pressure within 75\% of each sole half-extent, a friction pyramid inscribed in 75\% of the
floor coefficient, and at least 15~N per loaded sole. During stepping, the unloading sole's force cap falls to zero
within 0.35~s before lift-off and rises over 0.30~s starting 0.04~s after reference landing; its friction and pressure limits shrink with the
cap. On the realistic plant, all policies add $0.85F_{c,j}\tanh(v_{d,j}/\varepsilon_j)$ to each drive effort and clip
to the effort bounds, with nominal Coulomb friction $F_{c,j}$, desired drive speed $v_{d,j}$, and smoothing speed
$\varepsilon_j=0.05$~rad/s for rotary drives or 0.005~m/s for screws. The plant applies actuator lag and speed-dependent saturation to the resulting commands. Shared references, gains, and
allocation isolate the effects of the dynamics representation; all model-based policies except gravity+PD use~\eqref{eq:law}.

\emph{Policies.} \emph{Proposed}: the assembled dynamics model of the complete mode. \emph{Projected tree}: RoboCompiler's
model~\eqref{eq:rc} uses Pinocchio's tree inertia (composite rigid-body algorithm, CRBA), Coriolis matrix, and gravity, with reflected rotor inertias added. These terms are projected through the same lift with analytic curvature; allocation and wrench recovery match the proposed model.
\emph{Per-robot}: each robot uses its own compiled model; links are not modelled. \emph{No curvature}: $\dot E$ omitted.
\emph{No actuator inertia}: screw barrels, rods and crosses and the reflected inertia of the screws are removed from
$\Gamma$ and $C$ (their weight stays in $g$). \emph{Massless actuators}: also their weight removed. \emph{Gravity+PD}:
compiled gravity with constant diagonal gains in the same coordinates, $Q=g-\bar\Gamma\,(k_s\Lambda e+(k_sI_d+\Lambda)(u-u_d))$
with $\bar\Gamma$ the diagonal matrix formed from $\Gamma$ at the initial reference, and quasi-static wrench maps in the allocation.
\emph{Actuator PD}: hand-tuned proportional--derivative (PD) loops at the ports, without a model.

For $N_s$ logged samples and $N_p$ tracked points, the position RMS is
\begin{equation}\label{eq:rms}
\epsilon_p=\sqrt{\frac{1}{N_sN_p}\sum_{n=1}^{N_s}\sum_{j=1}^{N_p}
\|p_j(t_n)-p_{d,j}(t_n)\|_2^2},
\end{equation}
where $p_j$ and $p_{d,j}$ are actual and desired world positions, with $N_p=5$ pelvises or $10$ palms; samples are logged at 100~Hz.
Force errors use the analogous RMS of predicted-minus-measured hand-force magnitudes or sole normal forces after 0.5~s.

Each formation dances for 30~s (Fig.~\ref{fig:steps}): a knee-bounce canon travelling around the formation (0.75~s
period), a wave of the linked hands (0.30~m rise, 1.25~s), a common lean, a sideways sway with pelvis and waist twist, and a
bow. In the travelling formations the robots step sideways together. References are generated on
the closure manifold of the mode (worst closure residual $1.0\!\times\!10^{-11}$). At sampled reference states, inverse-dynamics allocation with the composed model satisfies the effort bounds and support limits for each dance; with
per-robot models the allocation residual satisfies $\|z\|_\infty>10^{-3}$
at less than 1\% of ring-dance samples and 31\% of
travelling-ring samples, which leans on the hand links.

\begin{figure}[t]
\centering
\includegraphics[width=\columnwidth]{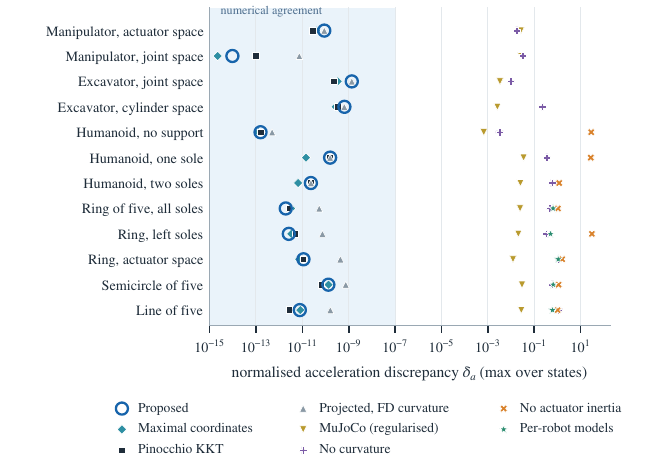}
\caption{State-wise dynamics consistency across humanoid configurations and two additional mechanisms. Humanoid state-wise accelerations against the 50-digit hard-constraint reference ($\delta_a$, max over the
states of each mode). Maximal coordinates: normalised body-acceleration discrepancy $\delta_b$ against the compiler-independent solve. Proposed: member-wise assembly with two-level composition. No curvature, no actuator inertia and per-robot
models remove one ingredient of the model.}
\label{fig:consistency}
\end{figure}

\begin{figure*}[!t]
\centering
\includegraphics[width=\textwidth]{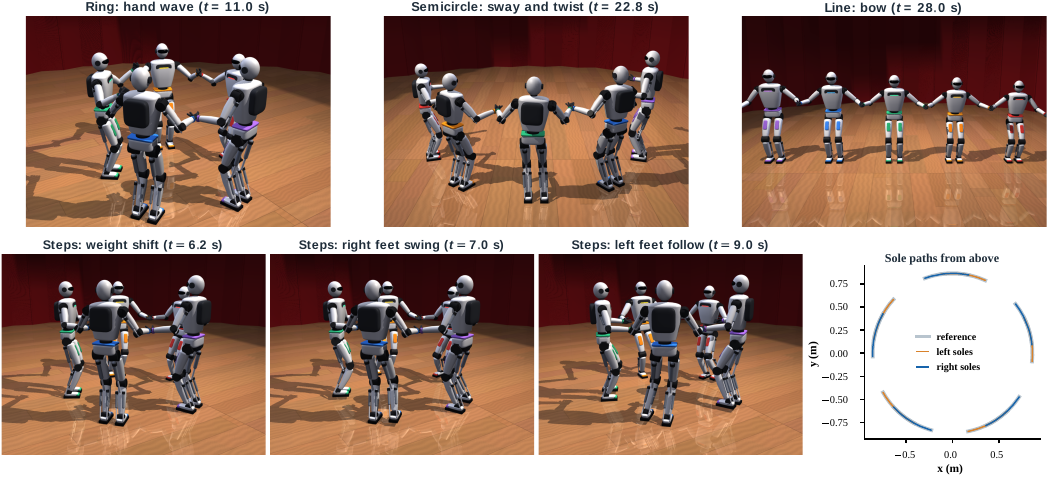}
\caption{Executed motions on the realistic plant, rendered from the logged states. Top: ring (hand wave), semicircle
(sway and twist) and line (bow). Bottom: travelling ring, one side-step cycle, and the sole trajectories of all ten feet
seen from above: grey curves show the references and coloured curves the executed left/right sole paths.}
\label{fig:steps}
\end{figure*}

\begin{table}[!t]
\centering
\caption{Humanoid evaluation times in ms. Member-wise assembly with two-level composition, including $C$ and analytic curvature. Projected:~\eqref{eq:rc} with analytic or finite-difference (FD) curvature. Pinocchio: proximal constrained dynamics~\cite{proximal} (tolerance $10^{-13}$, regularisation $\mu=10^{-9}$, 200-iteration limit; pin chords use two point pairs).}
\label{tab:cost}
\footnotesize\setlength{\tabcolsep}{3.0pt}
\begin{tabular}{@{}lrrrrr@{}}
\toprule
& $d$ & Proposed & \multicolumn{2}{c}{Projected} & Pinocchio \\
\cmidrule(lr){4-5}
System & & & analytic & FD & KKT \\
\midrule
Humanoid & 15 & 1.99 & 1.71 & 2.97 & 0.88 \\
Ring of five & 60 & 3.46 & 5.11 & 9.19 & 18.4 \\
\midrule
\multicolumn{6}{@{}l}{Ring, new closure mode (build and prepare): 15.6} \\
\multicolumn{6}{@{}l}{Ring, payload edit (rank 6): 0.035} \\
\bottomrule
\end{tabular}
\end{table}

\section{Results}\label{sec:results}
Fig.~\ref{fig:consistency} compares accelerations at 264 feasible states (seed 2026): 48 per single-humanoid mode and 24 per five-robot configuration. Independent coordinates are perturbed uniformly about the assembled configuration and corrected to closure; failed corrections and joint-limit-margin violations are rejected. Case-dependent sampling ranges are specified in the executable benchmarks.
The proposed model agrees with the 50-digit hard-constraint reference within $1.6\!\times\!10^{-10}$ in $\delta_a$ and $1.4\!\times\!10^{-10}$ in force-equivalent $\delta_f$, and with the maximal-coordinate reference within $1.4\!\times\!10^{-10}$ in $\delta_b$. Pinocchio's constrained dynamics~\cite{pinocchio,proximal} agree with the reference within $1.6\!\times\!10^{-10}$; one-level and two-level evaluation agree in all terms to $4.9\!\times\!10^{-14}$ (relative). RoboCompiler's projected model reproduces the accelerations using the analytic curvature of Lemma~\ref{lem:curvature}; finite-difference curvature is less accurate (ring: $5.4\!\times\!10^{-11}$ versus $1.9\!\times\!10^{-12}$) and slower (Table~\ref{tab:cost}). Ablation errors reach $1.2\!\times\!10^{0}$ without curvature, $3.1\!\times\!10^{1}$ without actuator-member inertia, and $1.1\!\times\!10^{0}$ without formation links.
Across the sampled humanoid states, MuJoCo's regularised constraints differ from the rigid reference by $1.5\!\times\!10^{-4}$ to $3.5\!\times\!10^{-2}$. The power identity holds within the accuracy of finite-differencing $\Gamma$ along the motion (relative to its largest entry: $3.9\!\times\!10^{-7}$ for the composed model and $6.1\!\times\!10^{-8}$ for single bodies). The regressor reproduces the terms to $2.6\!\times\!10^{-15}$, two partitions of the same ring agree in acceleration to $2.2\!\times\!10^{-13}$, and the compiled ODE's energy error decreases 15--16-fold per halved step, consistent with its fourth-order integrator.
The same algorithm was applied to the 3-DoF series-parallel manipulator of~\cite[Sec.~IV]{ip} and the excavator record of~\cite{robocompiler}, each with two choices of independent coordinates. Across 192 additional feasible states, maximum discrepancies were $1.4\!\times\!10^{-9}$ in $\delta_a$ and $3.4\!\times\!10^{-10}$ in $\delta_b$, supporting transfer across mechanism classes.

\begin{table*}[!t]
\centering

\caption{Thirty-second dances: RMS position and force-prediction errors, minimum sole normal force, and allocation steps with slack above 0.1. Force statistics exclude the first 0.5~s. Dash: unavailable or inapplicable; ``fell'': fall time.}

\label{tab:dance}
\footnotesize\setlength{\tabcolsep}{3.6pt}
\begin{tabular}{@{}l rrrrrr rrr rr@{}}
\toprule
& \multicolumn{6}{c}{Ring, realistic plant} & \multicolumn{3}{c}{Ring, ideal plant} & \multicolumn{2}{c}{Realistic, pelvis} \\
\cmidrule(lr){2-7}\cmidrule(lr){8-10}\cmidrule(l){11-12}
Policy & pelvis & palm & link $F$ & sole $F$ & min $F_z$ & infeas. & pelvis & palm & link $F$ & semicircle & line \\
& mm & mm & err.\ N & err.\ N & N & \% & mm & mm & err.\ N & mm & mm \\
\midrule
Proposed & 0.73 & 1.92 & 2.2 & 30 & 55 & 0 & 0.029 & 0.080 & 0.01 & 0.72 & 0.72 \\
Projected tree~\eqref{eq:rc} & 0.73 & 1.92 & -- & -- & 55 & 0 & -- & -- & -- & -- & -- \\
Per-robot models & 0.76 & 2.00 & -- & 39 & 21 & 11 & 0.191 & 0.223 & -- & 0.71 & 0.72 \\
No curvature & 0.73 & 2.09 & 2.3 & 30 & 54 & 0 & 0.053 & 0.725 & 0.53 & 0.72 & 0.72 \\
No actuator inertia & 1.28 & 2.52 & 2.4 & 33 & 55 & 0 & 0.587 & 0.852 & 0.40 & 1.26 & 1.27 \\
Massless actuators & 1.47 & 2.47 & 2.4 & 37 & 62 & 0 & 0.944 & 0.822 & 0.44 & 1.45 & 1.46 \\
Gravity + PD & 1.01 & 6.20 & -- & -- & 52 & 0 & 0.836 & 5.668 & -- & 1.01 & 1.01 \\
Actuator PD & 13.43 & 20.44 & -- & -- & 0 & -- & -- & -- & -- & fell, 17\,s & fell, 17\,s \\
\bottomrule
\end{tabular}
\end{table*}

\begin{figure*}[!t]
\centering
\includegraphics[width=\textwidth]{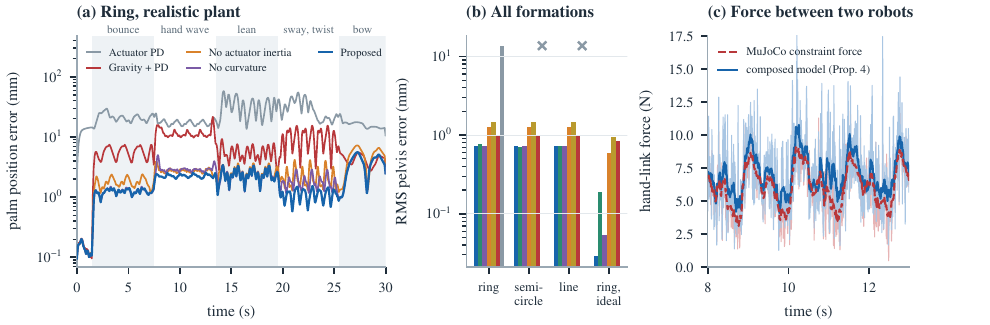}

\caption{Closed-loop dances. (a)~Largest palm error over ten palms, 0.2~s moving RMS; shading indicates choreography.
(b)~RMS pelvis error across formations and plants; colours follow (a), green denotes per-robot models, gold massless
actuators, and $\times$ a fall. (c)~Measured and predicted hand-link force magnitudes, 0.1~s moving means over raw signals.}

\label{fig:dance}
\end{figure*}

Table~\ref{tab:cost} reports median wall-clock model-evaluation times at one fixed feasible state per system (seed 2), after one untimed warm-up: 30 repetitions for the proposed model and Pinocchio, and five per projected-tree variant. Runs are sequential after other workflows, with one BLAS thread (Python 3.12.3, NumPy 2.3.5, Pinocchio 3.8.0, MuJoCo 3.15.0). Ring evaluation, including $C$ and analytic curvature, takes 3.5~ms versus 5.1~ms for projected-tree dynamics with analytic curvature, 9.2~ms with finite differences, 18.4~ms for rigid Pinocchio dynamics, and 0.21~ms for regularised MuJoCo forward dynamics. At the same robot state, ring and line $\Gamma^1_k$ agree to $1.8\!\times\!10^{-16}$ (relative); one- and two-level terms agree across six ring modes to $5.8\!\times\!10^{-16}$. Composer construction and preparation take 15.6~ms. A 3~kg payload requires a rank-six update taking 35~$\mu$s versus 4.0~ms for full reevaluation, agreeing to $3.3\!\times\!10^{-16}$ (relative). Recovered wrenches~\eqref{eq:wrench} agree with Pinocchio~\cite{proximal} within $3.3\!\times\!10^{-11}$~N.

Table~\ref{tab:dance} and Fig.~\ref{fig:dance} summarise tracking and forces. On the \emph{ideal plant}, the proposed ring model gives 0.029~mm RMS pelvis error and 0.080~mm palm error. Removing actuator inertia increases pelvis error 20.5-fold, omitting curvature increases palm error 9.1-fold, and per-robot models increase pelvis error 6.7-fold. On the \emph{realistic plant}, all formations complete with 0.72--0.73~mm RMS pelvis error and 1.92~mm ring palm error. Removing actuator inertia increases pelvis error 1.8-fold; omitting curvature increases palm error by 9\%. Actuator-PD runs fall in the semicircle and line. Per-robot models give 0.76~mm ring pelvis error, with
$\|z\|_\infty>0.1$ at 11\% of allocation steps and a minimum
sole normal force of 21~N, compared with 55~N for the composed model.
Measured hand-link forces are 7.0~N RMS with composed allocation versus 1.8~N with per-robot models. Predicted force magnitudes agree with MuJoCo to 2.2~N RMS on the realistic plant and 0.01~N on the ideal plant. Mechanically equivalent projected-tree dynamics give the same 0.73~mm ring pelvis error; the proposed model gives 0.011~mm on the rigid Pinocchio plant. Adding a 2~kg point payload at one palm at $t=1$~s and updating the forearm's inertial parameters reduces RMS palm error over the 30~s dance from 7.43 to 1.81~mm.
Travelling formations execute five 3.9~s side-step cycles: pelvis references shift over one foot while alternating swing feet rise 45~mm, advance 100~mm, and return to contact. The controller follows the reference schedule, switching among precompiled double-, left-, and right-support composers over the same five robot modules. Swing soles leave the support model 20~ms before reference lift-off and return 40~ms after reference landing. Each switch selects independent speeds from measured velocity and transfers the preceding actuator-effort vector for allocation. MuJoCo continues through switches, resolving lift-off and landing through unilateral contacts. Row selectors are rank-checked during preparation and fixed during execution.
In the ring, 20 support-mode switches produce a 32.1$^\circ$ turn (reference 32.1$^\circ$), 0.69~mm RMS pelvis error, 0.9~mm maximum landing error, and 1.8~mm maximum stance-foot slip (Fig.~\ref{fig:steps}). Landing error measures horizontal reference error 0.15~s after scheduled contact; stance slip measures horizontal displacement from each scheduled fully loaded stance interval's start. Semicircle and line RMS pelvis errors are 0.70 and 0.63~mm; doubling nominal sensor noise gives 0.63~mm for the stepping semicircle. For loaded soles ($F_z>50$~N), measured centre of pressure exceeds 95\% of a sole half-extent in 1.3\% of ring samples; tangential-to-normal force exceeds 95\% of the floor friction coefficient in 0.1\%. Per-robot models fall during the first cycle in every formation (ring: 4.7~s), with allocation slack above threshold at 43\% of pre-fall control steps.

\section{Conclusion}\label{sec:conclusion}
The proposed algorithm compiles composable dynamics from physical records, preserving coupled dynamics through analytic curvature and two-level composition. Independent references confirm numerical consistency, and five hand-linked humanoids demonstrate actuator-aware tracking, force allocation, payload adaptation, and support switching in MuJoCo. Reusable robot reductions, local inertial updates, and member regressors support changing connectivity and inertial parameter estimation.

\bibliographystyle{IEEEtran}
\bibliography{refere}

\end{document}